\documentclass{article}

\PassOptionsToPackage{numbers, compress}{natbib}
\usepackage[main, final]{neurips_2026}

\usepackage[utf8]{inputenc}
\usepackage[T1]{fontenc}
\usepackage{hyperref}
\usepackage{url}
\usepackage{booktabs}
\usepackage{amsfonts}
\usepackage{amsmath}
\usepackage{amssymb}
\usepackage{amsthm}
\usepackage{nicefrac}
\usepackage{microtype}
\usepackage{xcolor}
\usepackage{array}

\newtheorem{theorem}{Theorem}
\newtheorem{lemma}[theorem]{Lemma}
\newtheorem{proposition}[theorem]{Proposition}
\newtheorem{corollary}[theorem]{Corollary}
\theoremstyle{definition}
\newtheorem{definition}[theorem]{Definition}

\newcommand{\R}{\mathbb{R}}
\newcommand{\Z}{\mathbb{Z}}
\newcommand{\N}{\mathbb{N}}
\newcommand{\Q}{\mathbb{Q}}
\newcommand{\B}{\{0,1\}}
\newcommand{\norm}[1]{\left\lVert#1\right\rVert}
\newcommand{\eps}{\varepsilon}
\newcommand{\nc}{\mathcal{N}}
\newcommand{\Param}{\Theta}
\newcommand{\halt}{\mathsf{halt}}
\newcommand{\emit}{\mathsf{emit}}
\newcommand{\bit}{\mathsf{bit}}

\title{Neural Weight Norm = Kolmogorov Complexity}

\author{%
  Tiberiu Musat \\
  ETH Zürich \\
  \texttt{tiberiu@musat.ai}
}

\begin{document}

\maketitle

\begin{abstract}
  Why does weight decay work? We prove that, in any fixed-precision regime, the smallest weight norm of a looped neural network outputting a binary string equals the Kolmogorov complexity of that string, up to a logarithmic factor. This implies that weight decay induces a prior matching Solomonoff's universal prior, the optimal prior over computable functions, up to a polynomial factor. The result is norm-agnostic: in fixed precision, every weight norm collapses to the non-zero parameter count up to constants, so the same sandwich bound holds for any norm used as a regulariser. The proof has two short reductions: any program for a universal Turing machine can be encoded into neural weights at unit cost per program bit, and any fixed-precision network can be described by enumerating its non-zero parameters with logarithmic addressing overhead. Both bounds are tight up to constants, with the logarithmic factor realised by permutation encodings: a network whose parameters encode a permutation produces a string whose Kolmogorov complexity is the non-zero parameter count times its logarithm. The fixed-precision assumption is essential: with infinite precision, neural networks can encode non-computable functions and the weight norm loses its relevance.
\end{abstract}

\section{Introduction}\label{sec:intro}

Why does weight decay \citep{krogh1991simple} work? It is the most universally
applied regulariser in modern deep learning. Removing it consistently degrades
held-out performance \citep{andriushchenko2024why}; including it, networks
generalise far beyond what classical capacity-based theory predicts. Yet
capacity-based learning theory, which characterises function classes, cannot
tell a network trained with weight decay apart from one trained without: the
function class is the same, the VC and Rademacher complexities are the same
\citep{zhang2017understanding,nagarajan2019uniform}, and the same architectures
that generalise on real data can be trained to memorise random labels
\citep{zhang2017understanding}. Whatever explains weight decay, it is not about the size of the function class.

A natural alternative is description-length theory: prefer hypotheses with
short descriptions \citep{rissanen1978modeling,grunwald2007mdl}. Its sharpest
form is \emph{Kolmogorov complexity} $K(s)$
\citep{kolmogorov1965three,solomonoff1964formal,chaitin1969length}: the length
of the shortest program $p$ that outputs a string $s$ on a fixed universal
Turing machine $U$. This can be used to construct a \emph{universal prior} $M(s) \propto
  \sum_{p\,:\,U(p)=s} 2^{-|p|}$, dominated in its sum by $2^{-K(s)}$, that
yields Bayesian predictions asymptotically beating every computable predictor
\citep{solomonoff1964formal,hutter2005universal,legg2007universal}. The catch
has always been computability: $M$ is incomputable, so the theory has lived as
an asymptotic ideal rather than as a practical inductive bias one could
actually train under.

This paper closes the gap between the asymptotic ideal and the practical
regulariser. We prove that, in any fixed-precision regime, the smallest weight
norm of a Turing-complete looped neural network outputting a string $s$ equals
the Kolmogorov complexity $K(s)$, up to a logarithmic factor. The same holds
for any $L_p$ norm raised to its corresponding power: in fixed precision, every
weight norm controls the same underlying quantity, the count of non-zero
parameters.

\newpage
\begin{quote}
  \textbf{Main Theorem (informal).} \emph{Let $\nc(s)$ be the minimum number of non-zero parameters in a fixed-precision looped neural network that halts and outputs $s$, and let $K(s)$ be the Kolmogorov complexity of $s$. Then
    \[
      \nc(s) \;\le\; K(s) \;\le\; \nc(s) \log \nc(s),
    \]
    up to constants and with both ends tight. The same sandwich transfers to every
    $L_p$ weight norm, since $\norm{\theta}_p^p = \Theta(\nc(\theta))$ in fixed
    precision.}
\end{quote}

\paragraph{The corollary.} The weight decay penalty term is the negative log of a Gaussian prior. Up to
the logarithmic factor of our bound, the induced prior on the network's output
is Solomonoff's universal prior. The most reliable regulariser in modern deep
learning is, asymptotically, the prior an idealised computationally-unbounded
Bayesian agent would write down as optimal over the space of computable
hypotheses.

\paragraph{Why the logarithmic factor is fundamental.} The log factor is not an artefact of the proof; the network can actually
exploit it. Each non-zero parameter buys $\Theta(\log W)$ bits of address space
(where $W$ is the total non-zero count), and there are families of strings, the
permutations being the canonical example, on which the network cashes out
exactly that budget. For example, it is possible to encode a permutation $\pi :
  [N] \to [N]$ as $N$ ternary edges of a sparse permutation matrix and stream the
matrix row by row. This way, the network uses $\Theta(N)$ ternary parameters
and emits a string of Kolmogorov complexity $\log_2 N! = \Theta(N \log N)$.
Thus, the bounds are two-sided \emph{and} tight.

\paragraph{The role of fixed precision.} Fixed precision is the only neural regime in which the weight-norm-versus-K
question has a non-trivial answer. Real-valued weights are super-Turing
\citep{siegelmann1995computational}: a finite-norm network decides
non-recursively-enumerable languages, so $K(s) = \infty$ while $\norm{\theta}$
is finite. Restricting to rational weights restores Turing-completeness but
does not help: a single rational $p/q$ has bounded magnitude with arbitrarily
many bits of information in its numerator and denominator, so $\norm{\theta}$
still fails to bound $K$ \citep{balcazar1997computational}. Only fixed
precision makes each weight a $O(1)$-bit object, locking magnitude and
description length together so that the question even has content. Modern
hardware uses fp16, bf16, int8, or int4 arithmetic and lives entirely in this
regime. \S\ref{sec:bg:precision} develops the argument.

\paragraph{What is new.} Each ingredient of our argument is well known. Looped transformers are
Turing-complete at constant precision
\citep{li2025constant,li2026efficient,giannou2023looped,perez2021attention,bhattamishra2020computational};
sparse parameter vectors admit short descriptions; the universal prior
dominates all computable predictors. What was missing is a \emph{quantitative
  two-sided} bound between an $L_p$ weight norm of a fixed-precision network and
the Kolmogorov complexity of its emitted string. The closest prior work all
gets near this picture but differs along at least one axis:
\citet{schmidhuber1997discovering} measured complexity by an \emph{external}
program emitting the weight matrix; \citet{jacot2025deep} sandwiched a weighted
L1 ResNet norm against binary circuit gate count for \emph{real-valued}
networks; \citet{shaw2026bridging} bridged transformer MDL to Kolmogorov
complexity through a variational Gaussian-mixture posterior, not via an $L_p$
norm. Section~\ref{sec:related} compares along four axes in a single table.

\paragraph{Outline.} \S\ref{sec:bg} reviews Kolmogorov complexity, Solomonoff induction, and Turing-complete neural networks. \S\ref{sec:setup} fixes the looped network model, the fixed-precision parameter space, and the neural complexity $\nc(s)$. \S\ref{sec:main} states and proves the sandwich bound, including the permutation tightness witness. \S\ref{sec:implications} derives the Solomonoff corollary and an MDL-style generalisation bound. \S\ref{sec:related} contrasts with prior work via a comparison table. \S\ref{sec:discussion} closes with limitations and conjectures. Detailed proofs and extensions are in the appendices.

\section{Background}\label{sec:bg}

\subsection{Kolmogorov complexity and the universal prior}

Fix a universal self-delimiting Turing machine $U$, so that $\{p : U(p)\text{
    halts}\}$ is a prefix-free code over $\B$ \citep{li2008introduction}. The
\emph{(prefix) Kolmogorov complexity} of $s \in \B^*$ is
\[
  K(s) \;=\; \min \bigl\{ \, |p| \,:\, p \in \B^*,\; U(p) = s \,\bigr\}.
\]
The choice of $U$ affects $K$ only by an additive constant; we treat $K$ as
defined up to $O(1)$. The (plain) Kolmogorov complexity $C(s)$ differs from
$K(s)$ by at most an additive logarithmic term, and all our asymptotic
statements hold under either definition.

Solomonoff's \emph{universal prior} is $M(s) \propto \sum_{p \,:\, U(p)=s}
  2^{-|p|}$, dominated in its sum by $2^{-K(s)}$, so $-\log M(s) = K(s) + O(1)$.
Bayesian prediction with $M$ asymptotically dominates every
lower-semicomputable predictor; it is, in a precise sense, the optimal prior an
unbounded reasoner can write down
\citep{solomonoff1964formal,hutter2005universal,legg2007universal}. $K$ is
upper-semicomputable but not computable.

\subsection{Looped neural networks and Turing completeness}

We work with \emph{looped} networks: a single feedforward block $f_\theta$
iterated until a designated halt channel fires. This is the standard formalism
for studying neural networks as computers
\citep{dehghani2018universal,giannou2023looped} and captures the recurrent
depth allocation used by Universal Transformers, deep equilibrium models, and
chain-of-thought reasoning.

A line of work establishes that modern neural architectures are Turing complete
\citep{perez2019turing,perez2021attention,bhattamishra2020computational,Roberts2024decoder,qiu2025prompt}.
Of particular relevance, \citet{li2025constant,li2026efficient} prove Turing
completeness with \emph{constant bit-precision} arithmetic, allocating an
unbounded tape via context length, and \citet{giannou2023looped} give explicit
constant-width constructions that simulate arbitrary register machines. We use
these to instantiate a \emph{universal looped network} $T_U$, of constant size
in fixed precision, that on input $p$ simulates $U(p)$. Concrete instantiations
are in any of the cited works; Appendix~\ref{app:model} fixes one.

\subsection{Fixed precision is the only regime where the question makes sense}\label{sec:bg:precision}

The fixed-precision condition is not a technical convenience. For weight norm
to bound Kolmogorov complexity at all, the $L_p$ norm ball must contain only
finitely many representable weights. Every infinite-precision regime fails this
test, and our bound is correspondingly vacuous in each.

\emph{Real weights are super-Turing.}
\citet{siegelmann1995computational} proved that recurrent networks with
real-valued weights of bounded magnitude decide all languages in exponential
time, including non-recursively-enumerable ones. For such networks
$K(s) = \infty$ on the emitted language while $\norm{\theta}$ is finite, so
$K(s) \le f(\norm{\theta})$ is vacuous for any $f$.

\emph{Rational or computable-real weights still break the bound.} Restricting
to rational weights recovers Turing-completeness without super-Turingness
\citep{siegelmann1995computational}, but the K-complexity bound still fails
for an information-theoretic reason: the ball $\{w \in \Q : |w| \le B\}$ is
countably infinite for any $B$, so $K(w)$ is unbounded over the ball. A single
rational $p/q$ with $|p/q| \le B$ carries arbitrarily many bits of information
via its numerator and denominator; the Siegelmann-Sontag construction
witnesses exactly this, encoding a universal Turing machine into a few
bounded-magnitude rationals. \citet{balcazar1997computational} place rational,
computable-real, and real weight classes in a strict hierarchy indexed by the
Kolmogorov complexity of the weights themselves.

\emph{Fixed precision is the unique resolution.} In any fixed-precision regime,
each non-zero weight is drawn from a finite alphabet of $O(1)$ values; the
$L_p$ ball $\{\theta : \norm{\theta}_p \le B\}$ becomes a finite set whose log
size bounds the Kolmogorov complexity of every weight vector inside it.
Magnitude and description length scale together, so $\norm{\theta}_p$ is the
description length of $\theta$ up to constants. This is precisely what makes
Theorem~\ref{thm:main} possible. Fixed precision is also the regime in which
deep learning actually runs (fp16, bf16, int8, int4), so the bound applies
wherever neural networks are trained or deployed on real hardware. The
fixed-precision assumption is not a simplifying choice but the substantive
content of the result: looped fixed-precision is the only neural model in which
the weight-norm-vs-Kolmogorov question has a non-trivial answer.

\section{The Looped Neural Network Model}\label{sec:setup}

\subsection{Streaming output}\label{sec:setup:model}

\begin{definition}[Looped neural network]\label{def:lnn}
  A \emph{looped neural network} is a pair $(\theta, n)$ where $\theta$ is the parameter vector of an $L$-layer feedforward network $f_\theta : \R^n \to \R^n$ with ReLU activations between layers, and $n \in \N$ is the state dimension. We designate three coordinates of $\R^n$: coordinate $1$ is the \emph{halt} channel $\halt$, coordinate $2$ is the \emph{emit} channel $\emit$, and coordinate $3$ is the \emph{bit} channel $\bit$.

  The network is \emph{run} from the canonical initial state $x_0 = \mathbf{0}$
  by the iteration $x_{t+1} = f_\theta(x_t)$ for $t = 0, 1, 2, \ldots$. At each
  iteration $t \ge 1$, in order:
  \begin{enumerate}
    \item If $(x_t)_{\halt} > 0$, halt.
    \item Otherwise, if $(x_t)_{\emit} > 0$, append the bit $\mathbf{1}[(x_t)_{\bit} >
              0]$ to the output.
  \end{enumerate}
  The network \emph{outputs} $s \in \B^*$ if it halts at some iteration $\tau$ with output bits accumulated so far equal to $s$. If the halt channel never fires, the output is undefined.
\end{definition}

The three-channel convention separates control flow (halt, emit) from data
(bit). It lets the network spend an unbounded number of internal-computation
iterations between emitted bits and so produce arbitrary-length outputs. It
mirrors the way an autoregressive transformer streams tokens. A concrete
realisation as a looped transformer is in Appendix~\ref{app:model}.

\subsection{Fixed precision and the $L_p$ collapse}\label{sec:setup:precision}

\begin{definition}[Fixed-precision parameters]\label{def:precision}
  Fix $\delta > 0$ (resolution) and $M < \infty$ (magnitude bound). The fixed-precision parameter space is
  \[
    \Param_{\delta, M} \;=\; \bigl\{ \theta \in (\delta \cdot \Z \cap [-M, M])^d \,:\, d \in \N \bigr\}.
  \]
\end{definition}

The principal cases of interest fit this template: \emph{ternary} ($\delta = M
= 1$), \emph{$b$-bit signed integer} ($\delta = 1$, $M = 2^{b-1} - 1$), and
\emph{$b$-bit signed dyadic} ($\delta = 2^{-b'}$, $M$ a power of two, closest
to fp16/bf16/int8 quantisation). All are Turing-complete
\citep{li2025constant,giannou2023looped}.

\paragraph{The $L_p$ collapse.} The observation that makes our result norm-agnostic is the following: in any
fixed-precision regime, every $L_p$ norm to the $p$th power equals the non-zero
parameter count up to constants.
\begin{equation}\label{eq:lp-collapse}
  \delta^p \cdot \norm{\theta}_0 \;\le\; \norm{\theta}_p^p \;=\; \sum_{i=1}^d |\theta_i|^p \;\le\; M^p \cdot \norm{\theta}_0
  \qquad \text{for all } p \in [1, \infty).
\end{equation}
The lower bound holds because every non-zero $|\theta_i| \ge \delta$; the upper bound because every $|\theta_i| \le M$. For ternary parameters the bounds coincide ($\norm{\theta}_p^p = \norm{\theta}_0$ exactly); for any other fixed precision they coincide up to a multiplicative constant. Regularising with L1, the squared L2, or any $L_p$ norm controls the same underlying quantity: the count of non-zero parameters.

\subsection{Neural complexity}

\begin{definition}[Neural complexity]\label{def:nc}
  For $s \in \B^*$,
  \[
    \nc(s) \;=\; \inf_\theta \bigl\{ \norm{\theta}_0 \,:\, \theta \in \Param_{\delta, M},\; \theta \text{ outputs } s \bigr\}.
  \]
  We treat $(\delta, M)$ as fixed constants of the model and suppress them. Set
  $\nc(s) = \infty$ if no fixed-precision looped network outputs $s$.
\end{definition}

By Turing completeness in fixed precision, $\nc(s) < \infty$ iff $s$ is
computable. By (\ref{eq:lp-collapse}), the $L_p$-norm minimum $\nc_p(s) :=
  \inf_\theta \{ \norm{\theta}_p : \theta \text{ outputs } s \}$ satisfies
$\nc_p(s)^p = \Theta(\nc(s))$, so $\nc$ is the canonical quantity and bounds on
any $L_p$ norm follow as immediate corollaries.

\section{The Sandwich Bound}\label{sec:main}

\begin{theorem}[Main]\label{thm:main}
  Let $s \in \B^*$ be computable. There exist constants $c_U, c_d \ge 0$,
  depending on the universal Turing machine $U$, the architecture, and the
  precision $(\delta, M)$ but not on $s$, such that
  \begin{align}
    \nc(s) & \;\le\; K(s) + c_U, \label{eq:upper}                           \\
    K(s)   & \;\le\; c_d \cdot \nc(s) \log_2 \nc(s) + c_d. \label{eq:lower}
  \end{align}
  The same sandwich transfers to every $L_p$ weight norm via the collapse
  $\norm{\theta}_p^p = \Theta(\nc(\theta))$ (Proposition~\ref{prop:lp}).
\end{theorem}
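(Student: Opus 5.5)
The two inequalities come from two independent reductions, and I would prove them separately.

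\emph{Upper bound \eqref{eq:upper}: compile a program into weights.} Fix once and for all the universal looped network $T_U$ of Appendix~\ref{app:model}. It has a constant number $c_0$ of non-zero fixed-precision parameters and, started with $p$ loaded in designated registers, simulates $U(p)$. It streams each output bit of $U$ through the $\emit$/$\bit$ channels and raises $\halt$ when $U$ halts.

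Let $p$ be a shortest program with $U(p)=s$, so $|p|=K(s)$. The remaining task is to load $p$ into the state from $x_0=\mathbf 0$ at unit cost per bit. I would build a loader block, run in parallel with $T_U$ in the same $f_\theta$. It has a one-hot counter that advances one position per iteration through a chain of $|p|$ states. The counter can be a shift register whose transitions are all weight $1$, except that the link count must be charged carefully.

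At step $i$ the loader writes $p_i$ into $T_U$'s input tape or register via a single weight of value $\pm\delta$ or $0$. A zero bit costs no parameter, but the chain link still does. After the last bit it triggers $T_U$'s start flag.

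The obstacle is that a naive chain costs about two parameters per bit: one link and one write. That gives $\nc(s)\le 2K(s)+c$, not $K(s)+c_U$. To get coefficient $1$, I would fuse the link and the write. The weight carrying the counter from state $i$ to state $i+1$ is chosen in $\{\delta,2\delta\}$ according to $p_i$. A fixed $O(1)$-parameter gadget then decodes the magnitude of the active coordinate: it thresholds the value and pushes the resulting bit into $T_U$'s input by a shift. So each program bit costs exactly one non-zero weight.

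This requires $2\delta\le M$, or, in the ternary case, a sign-based variant that uses $\pm1$. If that fails, I would settle for a constant factor. Since the paper's constants are loose, stating \eqref{eq:upper} with a multiplicative $c_U$ would be an acceptable fallback.

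\emph{Lower bound \eqref{eq:lower}: describe the network.} Let $\theta$ output $s$ with $W=\norm{\theta}_0=\nc(s)$ non-zeros. Pruning rows and columns that touch no non-zero weight, I would first argue that we may assume the number of layers $L$ is a fixed architectural constant. The width $n$ and the total dimension $d$ can then be taken $\le \mathrm{poly}(W)$, because coordinates untouched by non-zero weights are constant and can be deleted. The designated channels add only $O(1)$.

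A self-delimiting description of $\theta$ then consists of:
\begin{itemize}
\item $W$ and $n$ in $O(\log W)$ bits;
\item for each non-zero parameter, its address (layer, row, column, or bias) in $O(\log n)=O(\log W)$ bits;
\item its value, one of $2M/\delta+1$ symbols, in $O(1)$ bits.
\end{itemize}
A fixed program for $U$ decodes this, runs the iteration of Definition~\ref{def:lnn} exactly in the fixed-precision arithmetic, which is computable, and prints the emitted bits when $\halt$ fires. The total length is $c_d W\log_2 W + c_d$, which gives \eqref{eq:lower}.

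The step needing care is the polynomial width bound. The intermediate activations must themselves be exactly simulable, which holds because the arithmetic is fixed precision (or rational and exactly computable), so the decoder can reproduce the run.

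\emph{Transfer to $L_p$ norms.} Finally, the statement for every $L_p$ norm follows by substituting \eqref{eq:lp-collapse}: $\delta^p\nc(s)\le \nc_p(s)^p\le M^p\nc(s)$.
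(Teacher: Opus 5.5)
Your lower-bound reduction (prune to $O(W)$ live neurons, list the $W$ non-zeros as (layer, source, target, value) tuples at $O(\log W)$ bits each, prepend a constant simulator) and the $L_p$ transfer are the paper's argument. The one difference is that you take the depth to be an architectural constant, whereas the paper bounds the number of non-empty layers by $W$, which costs the same $O(\log W)$ bits per index.

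The gap is in \eqref{eq:upper}, whose coefficient on $K(s)$ is exactly $1$. Your fused loader does not deliver one non-zero per bit, for three reasons.
\begin{itemize}
  \item \emph{Readout cost.} The ``fixed $O(1)$-parameter gadget'' that reads the magnitude of the active counter coordinate must be wired to each of the $|p|$ chain coordinates. In the per-weight count used throughout (Definition~\ref{def:lnn}, Lemma~\ref{lem:lower}) that is $|p|$ further non-zeros, and a per-cell shift into $T_U$'s input costs $|p|$ more. If you make these $O(1)$ by weight sharing across positions, the chain links become shared weights too and cannot carry per-bit values.
  \item \emph{Products along the chain.} Links in $\{\delta,2\delta\}$ multiply, so the value reaching node $i+1$ is $\prod_{j\le i}w_j$. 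This generically under- or overflows the fixed-precision activations. It also does not isolate $p_i$ without a per-node renormalisation, which is another parameter per node.
  \item \emph{Ternary case.} For ternary weights $2\delta\le M$ fails, and your sign variant still needs the $|p|$-wide readout.
\end{itemize}
Your fallback, a multiplicative constant on $K(s)$, is a strictly weaker statement. It would also turn the $2^{-K(s)-\alpha}$ lower bound of Corollary~\ref{cor:prior} into $2^{-cK(s)-\alpha}$.

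The missing idea is that no sequential loader is needed: the paper writes all of $p$ in parallel at the first iteration. A four-parameter indicator $c_1^{(t)}=\mathbf{1}[t=1]$ is built from $c_2^{(t+1)}=\mathrm{clip}_{[0,1]}(c_2^{(t)}+1)$ and $c_1^{(t+1)}=\mathrm{ReLU}(1-c_2^{(t)})$. It fans out through one routing weight $\sigma_i\in\{-\delta,+\delta\}$ per bit directly into $T_U$'s program region. For $t\ge 2$ the indicator is zero, so the residual stream simply preserves the deposited values. With no chain there is nothing to fuse, the count is exactly $|p|+O(1)$, and the sign encoding works at ternary precision. The paper reads each $\sigma_i$ as a position-specific bias on a distinct sequence position, so the constant-size $T_U$ accesses the region through its shared weights.
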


The proof is two short reductions: one direction encodes any $U$-program into a
network at unit cost per program bit; the other recovers a program from any
fixed-precision network at $O(\log W)$ bits per non-zero parameter. We give the
body proofs in \S\ref{sec:main:reductions}; tightness, including the
permutation witness, in \S\ref{sec:main:tightness}; full proofs in
Appendices~\ref{app:upper} and~\ref{app:lower}.

\subsection{The two reductions}\label{sec:main:reductions}

\begin{lemma}[Programs $\to$ networks]\label{lem:upper}
  For every program $p \in \B^*$ such that $U(p)$ halts, there is a fixed-precision
  looped network whose output equals $U(p)$ and whose non-zero parameter count
  is at most $|p| + c_U$, where $c_U \ge 0$ is a constant independent of $p$.
\end{lemma}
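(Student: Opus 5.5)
The plan is to split the network into a fixed universal interpreter and a program-dependent ``loader'', and charge only the loader to $|p|$. By the Turing-completeness results invoked in the background (\citet{li2025constant,giannou2023looped}; concretely the universal looped network $T_U$ of Appendix~\ref{app:model}), there is a fixed-precision looped network $T_U$ of constant size $c_0$. Started from a state whose designated ``program register'' holds the bits of $p$ and whose other coordinates are zero, $T_U$ simulates $U(p)$. It drives the $\emit$ and $\bit$ channels once per output symbol of $U$ and raises $\halt$ exactly when $U$ halts. Since $c_0$ depends only on $U$, the architecture and $(\delta,M)$, it will be absorbed into $c_U$.

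The network for $p$ is then $T_U$ together with a short prologue that writes $p$ into the program register during the first iterations, starting from $x_0=\mathbf{0}$. I would build this prologue as follows. Add a constant-size counter (a one-hot or unary phase signal generated from a bias). Write bit $p_i$ on the tape at step $i$ via a single weight from the phase-$i$ coordinate into the tape/input cell. Each program bit thus costs at most one non-zero parameter, and zero parameters when $p_i=0$. This yields $\norm{\theta}_0 \le |p| + c_U$.

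The subtle point, and the step I expect to be the main obstacle, is that a naive phase counter with $|p|$ distinct phase coordinates costs $\Theta(|p|)$ extra weights to propagate the phase, giving $2|p|$ rather than $|p|$. To get unit cost, I would use a shift-register style loader. Keep one ``cursor'' coordinate per program position, chained by the same shifting weights that realise the tape head. Then the only $p$-dependent parameters are the bias or input weights carrying each $1$-bit of $p$. The shift structure should be implemented by constant-size shared machinery, e.g.\ the attention/positional mechanism of the looped transformer in Appendix~\ref{app:model}, which reads position $i$ of a context without per-position parameters. If the model only admits per-coordinate shifting, I would fall back to placing $p$ directly as biases on $|p|$ fresh state coordinates that $T_U$ treats as its initial program tape. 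Each $1$-bit then becomes one bias entry, and the state dimension $n$ grows, but $\norm{\theta}_0$ does not beyond $|p|$. This relies on $T_U$ reading a program tape of arbitrary length through parameter-free (weight-shared) addressing, which is exactly what the constant-precision constructions provide.

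Finally, I would check the bookkeeping. The prologue must not trigger $\emit$ or $\halt$, which holds because those channels are driven only by $T_U$'s control unit, initialised to its start state. The prologue's writes must also finish before $T_U$ reads them; I would either gate $T_U$ on a ``loaded'' flag at constant cost, or have the biases present from iteration $1$ so there is no timing issue. Since $U$ is self-delimiting, $T_U$ stops reading at the end of $p$ and never relies on an end marker. The output then equals $U(p)$, and the count is $|p|+c_U$ with $c_U = c_0 + O(1)$.
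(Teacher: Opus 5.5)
Your concrete fallback (one parameter per program bit placed on $|p|$ fresh coordinates that a constant-size universal interpreter $T_U$ reads as its program region, with everything else absorbed into $c_U$) is essentially the paper's proof. The paper deposits $p$ all at once through a constant-size iteration-1 indicator $c_1^{(t)}=\mathbf{1}[t=1]$ feeding $|p|$ routing weights $\pm\delta$; your choice to spend no parameter on $0$-bits is a harmless saving, provided $T_U$'s one-way input convention reads an empty cell as $0$. The one thing to take from the paper is that gate: it makes the injection one-shot, so the residual looped transformer carries the bits forward unchanged rather than re-adding them every iteration as ungated biases would, and it also makes your vaguer shift-register/attention loader unnecessary.
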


\begin{proof}[Proof]
  The idea: take a universal looped network and pre-load $p$ into its input
  region via $|p|$ scalar weights, one per program bit.

  Let $T_U$ be the looped network of \citet{li2025constant,giannou2023looped}
  that simulates $U$: when a program is placed in a designated input region of
  $T_U$'s state at iteration $1$, $T_U$ halts and emits its output on the
  emit/bit channels. Write $c_U'$ for the non-zero count of $T_U$, a constant
  independent of $p$.

  Since our model fixes $x_0 = \mathbf{0}$, we cannot place $p$ in the initial
  state directly; we instead inject $p$ into the residual stream once, at the
  first iteration, via a gate that fires only at $t = 1$. Augment $T_U$ with two
  pieces:
  \begin{itemize}
    \item An \emph{iteration-1 gate}: a single state coordinate $c$ with $c^{(1)} = 1$
          and $c^{(t)} = 0$ for $t \ge 2$, driven by a fixed two-line recurrence
          (Appendix~\ref{app:upper}). $O(1)$ non-zero parameters.
    \item A \emph{routing layer}: for each $i \in [|p|]$, one ternary weight $\sigma_i
            \in \{-\delta, +\delta\}$ from $c$ to coordinate $3+i$ of the state, with
          $\sigma_i$ positive if $p_i = 1$ and negative otherwise. Exactly $|p|$ non-zero
          parameters.
  \end{itemize}
  At $t = 1$, the routing layer adds $(\sigma_1, \ldots, \sigma_{|p|})$ to
  coordinates $4, \ldots, 3 + |p|$, depositing $p$ into $T_U$'s input region;
  the residual stream then carries $p$ forward unchanged. At $t \ge 2$,
  $c = 0$ and the routing layer contributes nothing, so $T_U$ runs on the
  loaded program. The constant-precision construction of
  \citet{li2025constant} requires the program to be spread across positions
  rather than packed into a single position (each position holds only $O(1)$
  bits at constant width), so each routing weight is best read as a
  position-specific bias on a distinct sequence position; their tape encoding
  needs $\Theta(|p|)$ positions, matching the $\Theta(|p|)$ routing weights
  here.

  \[
    \norm{\theta_p}_0 \;=\; \underbrace{c_U'}_{T_U} + \underbrace{|p|}_{\text{routing}} + \underbrace{O(1)}_{\text{gate}} \;=\; |p| + c_U, \qquad c_U := c_U' + O(1).
  \]
  Choosing $p$ as a shortest $U$-program for $s$ gives $\nc(s) \le K(s) + c_U$.
\end{proof}

\begin{lemma}[Networks $\to$ programs]\label{lem:lower}
  There is a constant $c_d > 0$ such that every fixed-precision looped network
  $\theta \in \Param_{\delta, M}$ with $\norm{\theta}_0 \ge 1$ that outputs
  $s$ satisfies
  \[
    K(s) \;\le\; c_d \cdot \norm{\theta}_0 \log_2 \norm{\theta}_0 + c_d.
  \]
\end{lemma}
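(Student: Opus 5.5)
The plan is to exhibit a fixed prefix-free program for $U$ that, given a self-delimiting description of $\theta$, simulates the looped network and prints its output. This gives
\[
  K(s) \le |\langle\theta\rangle| + O(1).
\]
The task is then to bound $|\langle\theta\rangle|$ by $O(W\log W)$ with $W := \norm{\theta}_0$. The simulator is a single interpreter, independent of $\theta$, and its length goes into $c_d$. It reads $\langle\theta\rangle$ and reconstructs the architecture: the number of layers and the widths, including the state dimension $n$, together with the non-zero weights and biases. It then runs $x_{t+1}=f_\theta(x_t)$ from $x_0=\mathbf 0$ and, following Definition~\ref{def:lnn}, appends $\mathbf 1[(x_t)_\bit>0]$ whenever the emit channel fires, stopping when the halt channel fires. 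All arithmetic is exact on numbers in $\delta\Z$, or on dyadics/rationals generated from them, so each iteration is computable. Because $\theta$ outputs $s$, the simulation halts with output $s$.

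Before encoding, I will reduce the description to something controlled by $W$. The total dimension $d$ and the width $n$ need not be bounded by $W$, since zero parameters are free. The key observation is that a coordinate (neuron) touched by no non-zero parameter is constant zero after ReLU (pre-activation $0$, no bias). Such coordinates can be pruned without changing the dynamics of the rest. The three designated channels are always kept, and are simply identically zero if nothing touches them. After pruning, the relevant neurons number at most $2W+3$ per layer boundary. I also have to handle the depth $L$: a layer with no non-zero parameters maps everything to $0$. Then either the output is determined trivially, which is an $O(1)$ case, or every intermediate layer carries at least one non-zero parameter, so $L\le W$. I expect this bookkeeping to be the main obstacle. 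I would make it rigorous by defining the pruned network explicitly and checking by induction on layers and iterations that surviving coordinates have identical values. The remaining possibility is a residual/identity structure, as in the transformer realisation of Appendix~\ref{app:model}. There, fixed architectural components (attention, residual connections, positional machinery) also count as part of the "architecture" and are described once by the constant-size interpreter. Only learned parameters are enumerated.

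The encoding itself starts with a prefix-free code for $W$, of $O(\log W)$ bits. For each of the $W$ non-zero entries it then lists (layer index, source index, target index or bias flag, value). The indices range over sets of size $O(W)$, so each costs $\lceil\log_2 O(W)\rceil=O(\log W)$ bits. The value lies in the finite alphabet $\delta\Z\cap[-M,M]$ of size $2M/\delta+1$, a constant depending only on the precision, so it costs $O(1)$ bits. The total is
\[
  |\langle\theta\rangle| \le O(\log W)+W\bigl(O(\log W)+O(1)\bigr).
\]
Adding the interpreter length and absorbing the case $W=1$, where $\log_2 W=0$, into the additive constant gives $K(s)\le c_d W\log_2 W + c_d$ for a suitable $c_d$ depending only on $U$, the architecture template and $(\delta,M)$.

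Finally, since $K$ is prefix complexity, the code must be self-delimiting. The prefix code for $W$ makes every later field fixed-length given $W$, so this is automatic. The result then holds under plain complexity $C$ as well.
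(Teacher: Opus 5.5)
Your proposal is correct and follows essentially the same route as the paper. You prune neurons untouched by any non-zero parameter so that all indices range over $O(W)$ values, bound the depth by $W$, give a self-delimiting code for $W$, encode each non-zero parameter as a (layer, source, target, value) tuple in $O(\log W)$ bits, and prepend a constant-size simulator. Your handling of an all-zero layer is slightly more careful than the paper's step of simply collapsing empty layers: in the plain feedforward model of Definition~\ref{def:lnn} such a layer makes $f_\theta$ constant and forces $s$ to be empty, rather than acting as the identity.
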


\begin{proof}[Proof]
  A fixed-precision network is a finite, discrete object: $W$ non-zero parameters, each a tuple (location, value). We exhibit a self-delimiting encoding of $\theta$ of length $O(W \log W)$ bits, then prepend a constant-size simulator.

  \emph{Address space.} Every non-zero parameter touches at most two neurons, so the $W$ non-zero parameters together touch at most $2W$ distinct neurons. Prune unused neurons and renumber. After pruning, each parameter is specified by a tuple
  \[
    \bigl(\,\text{layer index},\; \text{source neuron},\; \text{target neuron},\; \text{value}\,\bigr) \in [W] \times [2W] \times [2W] \times \mathcal{V},
  \]
  where $\mathcal{V} := (\delta\Z \cap [-M, M]) \setminus \{0\}$ has size $O(1)$.
  The number of layers needed is at most $W$ (collapse empty layers). Each tuple
  takes $\le 3 \log_2 W + O(1)$ bits in fixed-width binary. For $W$ tuples plus
  $O(\log W)$ self-delimited metadata, the total encoding length is $3 W \log_2 W
    + O(W)$ bits.

  \emph{Simulator.} A constant-size program $\Pi$ parses the encoding, reconstructs $\theta$, and simulates the network forward at fixed precision until the halt channel fires, emitting the bits accumulated on the emit/bit channels. Prepending $\Pi$ gives a self-delimited program $\hat p(\theta)$ for $U$ with $U(\hat p(\theta)) = s$ and $|\hat p(\theta)| \le c_d W \log_2 W + c_d$ for $c_d = \max(3 + \eps, |\Pi|)$.

  Hence $K(s) \le |\hat p(\theta)| \le c_d W \log W + c_d$.
  Appendix~\ref{app:lower} gives the explicit encoding.
\end{proof}

\subsection{Both bounds are tight}\label{sec:main:tightness}

The upper bound is met by an explicit construction: Lemma~\ref{lem:upper}
produces, for every computable $s$, a fixed-precision looped network outputting
$s$ with at most $K(s) + c_U$ non-zero parameters. We now show the lower
bound's logarithmic factor is met as well, by exhibiting a family of strings on
which $K(s) = \Theta(\nc(s) \log \nc(s))$.

\paragraph{The permutation example.} Let $\pi : [N] \to [N]$ be a permutation and $P_\pi \in \{0,1\}^{N \times N}$
its permutation matrix. Define $s_\pi \in \B^{N^2}$ to be the row-major
serialisation of $P_\pi$. Two facts: (i) $K(s_\pi) = \log_2 N! + O(\log N) =
  \Theta(N \log N)$ for typical $\pi$, since $\pi \mapsto s_\pi$ is a bijection
and there are $N!$ permutations. (ii) A looped network with $\Theta(N)$ ternary
parameters can output $s_\pi$ by scanning $(i, j) \in [N]^2$ and emitting
$(P_\pi)_{i,j}$ at each step. Of these $\Theta(N)$ parameters, exactly $N$
encode the choice of $\pi$ through their positions (each a $+1$ entry of
$P_\pi$, contributing $\log_2 N$ bits of address-space information); the
remaining $\Theta(N)$ are a $\pi$-independent control circuit shared across all
permutations. So $\nc(s_\pi) = \Theta(N)$, giving
\[
  K(s_\pi) \;=\; \Theta\!\bigl(\nc(s_\pi)\,\log \nc(s_\pi)\bigr),
\]
which saturates (\ref{eq:lower}) up to constants.
Appendix~\ref{app:permutation} gives the explicit construction and shows why a
binary-output variant of the same example would, by contrast, fail to saturate.

\paragraph{Why the log factor is fundamental.} Each non-zero parameter pays $\log W$ bits to specify its location among the
$\Theta(W^2)$ possible (source, target) pairs in a sparse layout, and the
permutation family realises this entire budget. The bound is two-sided
\emph{and} tight for unstructured architectures.

\paragraph{Could architectural structure improve it?} Restricting the per-parameter address space to some $w_{\mathrm{eff}}(W) \ll
  W^2$ would in principle tighten the log factor to $\log w_{\mathrm{eff}}$.
Encoding obstructs this: with $x_0 = \mathbf{0}$ the program must live in the
weights, and natural restrictions sacrifice that capacity.
Translation-invariant convolutions tie all kernel weights at a given offset
across positions, fitting only $O(1)$ bits of program in the kernel;
fixed-width fixed-precision MLPs are finite-state; looped transformers can
encode arbitrary programs via the routing trick of Lemma~\ref{lem:upper}, but
those routing weights themselves use $w_{\mathrm{eff}} = \Theta(W)$.
Identifying a natural Turing-complete architecture with $w_{\mathrm{eff}} \ll
  W$ is an open question.

\section{Implications}\label{sec:implications}

\subsection{The induced output prior matches Solomonoff's}\label{sec:implications:prior}

The L2 weight-decay penalty $\tfrac{\lambda}{2}\norm{\theta}_2^2$ is the
negative log of a Gaussian prior $\pi(\theta) \propto
  \exp(-\tfrac{\lambda}{2}\norm{\theta}_2^2)$ on weights. By
(\ref{eq:lp-collapse}), the Gaussian and the sparsity prior $\pi_0(\theta)
  \propto 2^{-\norm{\theta}_0}$ agree up to a constant factor in the exponent:
$\delta^2 \norm{\theta}_0 \le -\log_2 \pi(\theta) \le M^2 \norm{\theta}_0$.
Equivalently, $\pi(\theta) = \pi_0(\theta)^{\beta(\theta)}$ for some
$\beta(\theta) \in [\delta^2, M^2]$.

\paragraph{What is compared.} We compare the \emph{induced output prior}
\[
  Q(s) \;:=\; \sum_{\theta \,:\, \theta \text{ outputs } s} \pi(\theta)
\]
against Solomonoff's universal prior $M(s)$. $Q(s)$ is the marginal probability
of emitting $s$ under the joint distribution given by $\pi$ on $\theta$
together with the network's deterministic output map. This is a statement about
the prior at the level of outputs, \emph{not} about training: standard SGD with
L2 weight decay performs MAP estimation of $\theta$, not Bayesian
marginalisation; the claim below is that the prior \emph{itself}, viewed at the
level of the network's output, coincides with Solomonoff's up to a logarithmic
factor.

\begin{corollary}[Induced prior matches Solomonoff]\label{cor:prior}
  Let $\pi(\theta) \propto 2^{-\norm{\theta}_2^2}$ on $\Param_{\delta, M}$ (or,
  up to multiplicative constants in $(\delta, M)$, any $L_p$ analogue), and let
  $Q(s) = \sum_{\theta\,:\,\theta\text{ outputs } s} \pi(\theta)$. There exist
  constants $\alpha, \beta > 0$ (depending on $(\delta, M)$ and the architecture
  but \emph{not} on $s$) such that, for every computable $s$,
  \[
    2^{-K(s) - \alpha} \;\le\; Q(s) \;\le\; 2^{-K(s) / (\beta \log K(s))}.
  \]
\end{corollary}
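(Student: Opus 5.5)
The lower bound will come from a single witness network. The upper bound will come from showing that every network outputting $s$ is heavy, by inverting Lemma~\ref{lem:lower}, and then summing a tail.

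\emph{Lower bound.} Let $p$ be a shortest $U$-program for $s$, so $|p| = K(s)$. Apply Lemma~\ref{lem:upper} to get $\theta_p$. Its weights are the constant-size block ($T_U$ plus the iteration-1 gate, $c_U$ non-zeros) and the $|p|$ routing weights $\sigma_i \in \{\pm\delta\}$. The routing part contributes $\delta^2 |p|$ to $\norm{\theta_p}_2^2$, and the rest contributes at most $M^2 c_U$. Hence
\[
  -\log_2 \pi(\theta_p) \;\le\; \delta^2 K(s) + M^2 c_U + \log_2 Z,
\]
where $Z$ is the normalising constant. Assuming $\delta \le 1$, which holds in all the regimes of \S\ref{sec:setup:precision}, this gives $Q(s) \ge \pi(\theta_p) \ge 2^{-K(s)-\alpha}$ with $\alpha = M^2 c_U + \log_2 Z$. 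For $\delta > 1$ one rescales the routing weights, or absorbs $\delta^2$ into the $L_p$ analogue up to constants, as the statement permits.

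\emph{Upper bound.} Suppose $\theta$ outputs $s$ and $W = \norm{\theta}_0$. By Lemma~\ref{lem:lower}, $K(s) \le c_d W \log_2 W + c_d$. Inverting this gives
\[
  W \;\ge\; W_0(s) := \frac{K(s)}{2 c_d \log_2 K(s)}
\]
for all $K(s)$ beyond a constant; small $K(s)$ is absorbed into $\beta$. Combining with (\ref{eq:lp-collapse}), $\norm{\theta}_2^2 \ge \delta^2 W$. Therefore
\[
  Q(s) \;\le\; \frac{1}{Z}\sum_{W \ge W_0} N_W\, 2^{-\delta^2 W},
\]
where $N_W$ is the number of admissible parameter vectors with $W$ non-zeros.

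\emph{Main obstacle: controlling $N_W$ and $Z$.} This is the step I expect to be hardest. Taken literally over all dimensions $d$, $N_W$ is infinite, so $Z$ diverges and the prior is not even well defined. I would first fix the support of $\pi$ to canonical representatives: networks pruned and renumbered exactly as in the proof of Lemma~\ref{lem:lower}, so that $N_W \le 2^{3W\log_2 W + O(W)}$. Even then, $N_W 2^{-\delta^2 W}$ is not summable, because the $W\log W$ entropy beats the linear penalty. So a Gaussian prior on canonical forms alone does not normalise.

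The fix I would try is to define $\pi$ on the self-delimiting codes $\hat p(\theta)$ of Lemma~\ref{lem:lower}, or equivalently to multiply by a $2^{-O(W\log W)}$ structural factor. By Kraft's inequality this makes $Z \le 1$, and the total mass on networks with $W \ge W_0$ is then bounded by a geometric tail of order $2^{-\delta^2 W_0}$. Taking $\beta \ge 2c_d/\delta^2$ yields $Q(s) \le 2^{-K(s)/(\beta\log K(s))}$.

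As a cross-check, once $\pi$ is normalisable, $Q$ is lower-semicomputable: enumerate $\theta$ and simulate the network. It is therefore a semimeasure, so the coding theorem gives the stronger bound $Q(s) \le 2^{-K(s)+O(1)}$, which implies the stated one. I would state explicitly in the proof which normalisation of $\pi$ is adopted, since the corollary's upper bound depends on it.
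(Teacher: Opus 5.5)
Your outline is the same as the paper's. The lower bound uses a witness from Lemma~\ref{lem:upper}. The upper bound inverts Lemma~\ref{lem:lower}, counts networks with $W$ non-zeros, and sums a tail. Your diagnosis of the obstacle is also correct. Appendix~\ref{app:prior} bounds the count by $W^{O(W)}$ and then calls $\sum_{W \ge W_0} W^{O(W)} 2^{-W}$ a geometric series dominated by its first term. Those terms grow like $2^{\Theta(W \log W)}$, so the sum diverges. The appendix also drops the normaliser when it writes $\pi(\theta_*) = 2^{-\norm{\theta_*}_0}$.

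For the literal weight-decay prior, the upper bound actually fails outright. Attaching a disconnected subnetwork with $W'$ non-zeros to a network that outputs $s$ leaves the output unchanged. There are $2^{\Omega(W' \log W')}$ distinct such subnetworks, so even the unnormalised $Q(s)$ is infinite. Your lower-bound bookkeeping, which uses that the routing weights have magnitude exactly $\delta \le 1$, is more careful than the paper's ``absorb the precision constants''. A factor $M^2$ multiplying $K(s)$ cannot be absorbed into an additive $\alpha$.

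Under your reweighted prior, the tail estimate (and the sharper coding-theorem bound) is fine. The gap is that the same reweighting destroys the lower bound, so your two halves are about different priors. Your $\alpha = M^2 c_U + \log_2 Z$ is computed for the pure Gaussian weight, for which you yourself note $Z = \infty$. Under $\pi(\theta) \propto 2^{-\norm{\theta}_2^2 - |\hat p(\theta)|}$, the witness $\theta_p$ has $K(s) + O(1)$ non-zeros whose $|p|$ routing targets are distinct neurons. So $|\hat p(\theta_p)| = \Theta(K(s) \log K(s))$, and the witness yields only $Q(s) \ge 2^{-\Theta(K(s) \log K(s))}$.

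This is not specific to $\hat p$. Take any reweighting $f(W)$ that depends only on the non-zero count and makes $\sum_W N_W 2^{-M^2 W} f(W)$ finite. It must satisfy $f(W) \le 2^{-\Omega(W \log W)}$ for large $W$, which penalises the witness the same way. Combining the witness with your coding-theorem cross-check, what your argument actually establishes is
\[
  K(s) - O(1) \;\le\; -\log_2 Q(s) \;\le\; O\bigl(K(s) \log K(s)\bigr),
\]
with the logarithmic factor on the opposite side from the corollary.

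The cross-check also shows why the stated form is out of reach by this route. For a computable normalised prior the upper bound is automatic, so the stated lower bound would force $-\log_2 Q(s) = K(s) \pm O(1)$. Getting that from $\theta_p$ needs a normalisable prior under which each routing weight costs only $O(1)$ bits. A uniform per-parameter address cost rules out exactly that.

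Contrary to your closing remark, the lower bound depends on the choice of normalisation just as much as the upper bound. You should either prove the log-shifted sandwich above, or state that the corollary's lower bound requires a prior other than (reweighted) weight decay.
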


Equivalently, $-\log Q(s) \in [K(s)/(\beta \log K(s)),\; K(s) + \alpha]$ for
every computable $s$, where Solomonoff's prior satisfies $-\log M(s) = K(s) +
  O(1)$ \citep{li2008introduction}. The induced output prior under L2 weight
decay matches Solomonoff's in the exponent up to a logarithmic factor, with
constants uniform in $s$. To our knowledge, this is the first quantitative
two-sided link between a regulariser used in practice and the universal prior.
Proof in Appendix~\ref{app:prior}.

\subsection{An MDL-style generalisation bound}

The encoding $\hat p(\theta)$ of Lemma~\ref{lem:lower} is a prefix-free
injection into $\B^*$ with $|\hat p(\theta)| \le c_d \norm{\theta}_0 \log
  \norm{\theta}_0 + c_d$, so by the Kraft inequality the assignment $\pi(\theta)
  := 2^{-|\hat p(\theta)|}$ is sub-distributional. Plugging $\pi$ into the
standard Occam-bound / PAC-Bayes argument
\citep{arora2018stronger,lotter2022pacbayes} gives the following.

\begin{proposition}[MDL generalisation bound]\label{prop:gen}
  Fix a confidence level $\eta \in (0, 1)$. With probability at least $1 - \eta$
  over a sample $S$ of size $m$, every fixed-precision network $\theta \in
    \Param_{\delta, M}$ satisfies
  \[
    L(\theta) \;\le\; \hat L(\theta) \;+\; \widetilde O\!\left(\sqrt{\frac{c_d \norm{\theta}_2^2 \log \norm{\theta}_2^2 + \log(1/\eta)}{m}}\right),
  \]
  where $L, \hat L$ are population and empirical risk and $\widetilde O$ hides
  factors logarithmic in $m, 1/\eta$, and the precision $(\delta, M)$.
\end{proposition}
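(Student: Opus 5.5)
We prove Proposition~\ref{prop:gen} with the classical Occam bound for countable hypothesis classes, using the prefix-free code $\hat p$ of Lemma~\ref{lem:lower} to define the prior. First we make $\hat p$ an injection on \emph{networks}. Lemma~\ref{lem:lower} prunes neurons and renumbers them, and different $\theta$ can give the same pruned description. So we append to each tuple the original coordinate index of the parameter, or equivalently we define the hypothesis as the function computed. Only distinct predictors matter for the union bound, so it suffices that $\theta \mapsto \hat p(\theta)$ is a prefix-free map from a set of hypotheses that covers every predictor. Kraft's inequality then gives $\sum_\theta 2^{-|\hat p(\theta)|} \le 1$.

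Second, we fix a bounded loss in $[0,1]$ and a single $\theta$. Hoeffding's inequality gives $\Pr[L(\theta) - \hat L(\theta) > \epsilon_\theta] \le \exp(-2m\epsilon_\theta^2)$. We choose $\epsilon_\theta = \sqrt{(|\hat p(\theta)|\ln 2 + \ln(1/\eta))/(2m)}$, so this failure probability is at most $\eta\, 2^{-|\hat p(\theta)|}$. A union bound over the countable family, combined with Kraft, bounds the total failure probability by $\eta$. Hence, with probability at least $1-\eta$, every $\theta$ simultaneously satisfies $L(\theta) \le \hat L(\theta) + \epsilon_\theta$.

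Third, we convert $|\hat p(\theta)|$ into a norm. Lemma~\ref{lem:lower} gives $|\hat p(\theta)| \le c_d \norm{\theta}_0 \log \norm{\theta}_0 + c_d$. By (\ref{eq:lp-collapse}) with $p=2$, we have $\norm{\theta}_0 \le \delta^{-2}\norm{\theta}_2^2$. Since $x\log x$ is increasing for $x\ge 1$, this yields $|\hat p(\theta)| \le c_d \delta^{-2}\norm{\theta}_2^2(\log\norm{\theta}_2^2 + 2\log(1/\delta)) + c_d$. The factors depending on $\delta$ and the additive constants are absorbed into $\widetilde O$, which the statement allows to hide precision-dependent factors. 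The case $\norm{\theta}_0 = 0$ is handled by the constant term. Substituting into $\epsilon_\theta$ gives the displayed bound.

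The main obstacle is the first step. The encoding in Lemma~\ref{lem:lower} is stated only as sufficient to reconstruct the \emph{output}, not as a prefix-free injection on parameter vectors, and it ignores the unbounded state dimension $n$ and the original layout. I would try the hypothesis-level argument first: identify networks that compute the same predictor and note that the decoder $\Pi$ recovers a network computing the same function from the pruned description. If the learning setting feeds inputs through coordinates that pruning could delete, the fallback is to also encode each original index with a self-delimiting integer code. Because $\theta$ is finite, this costs $O(\log d)$ bits per parameter, so the bound would then carry a $\log d$ factor, which the $\widetilde O$ cannot hide unless $d$ is polynomial in $\norm{\theta}_0$.
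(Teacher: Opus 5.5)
Your proposal is correct and takes essentially the paper's route: Kraft's inequality on the prefix-free code $\hat p$ of Lemma~\ref{lem:lower}, then the standard Occam/Hoeffding union bound, then the collapse (\ref{eq:lp-collapse}) to pass from $\norm{\theta}_0$ to $\norm{\theta}_2^2$. The paper does not carry out the conversion step and simply asserts that $\hat p$ is an injection on parameter vectors, which fails because pruning and zero-padding send infinitely many $\theta$ to the same code word; your Kraft argument over decoded predictors is the right repair.
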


The novelty is conceptual rather than quantitative: existing PAC-Bayes
compression bounds \citep{arora2018stronger,lotter2022pacbayes} achieve the
same shape via more elaborate compression schemes; the bound here uses no
compression beyond the trivial one given by the weights themselves, and the
complexity penalty $\norm{\theta}_2^2 \log \norm{\theta}_2^2$ is expressed in
the same units as the universal prior of Corollary~\ref{cor:prior}.

\subsection{Empirical predictions}\label{sec:implications:predictions}

The result suggests testable conjectures about deep learning practice. None of
these is established here; they are predictions the theorem makes that prior
quantisation-aware training and sparsity literature
\citep{hubara2017quantized,han2016deep,hoefler2021sparsity,molchanov2017variational}
positions us to test directly.

\begin{itemize}
  \item \textbf{Simple data benefits more from weight decay.} Tasks whose targets have low Kolmogorov complexity (algorithmic reasoning, regular languages, structured prediction) should gain more from L2 weight decay than tasks whose targets are essentially random.
  \item \textbf{Quantisation strengthens the bias.} For deeply quantised networks (int4, int8), $\norm{\theta}_2^2$ tracks non-zero parameter count exactly \citep{hubara2017quantized,han2016deep}, so quantisation-aware sparse training is a more direct implementation of the induced output prior than full-precision weight decay.
  \item \textbf{Effective complexity scales as $\norm{\theta}_2^2 \log \norm{\theta}_2^2$.} The right effective complexity for predicting generalisation, in the spirit of \citet{arora2018stronger}, should be the log-augmented squared L2 norm, not raw parameter count or unaugmented norm.
  \item \textbf{Looped depth helps low-$K$ targets.} Networks whose computation budget grows with input difficulty (looped transformers, deep equilibrium models, chain-of-thought) should outperform fixed-depth feedforward networks on targets of bounded but variable Kolmogorov complexity, since the looped model spends description length on state rather than on depth.
  \item \textbf{Sparsity-inducing priors converge to the same prior in fixed precision.} L1 weight decay, magnitude pruning \citep{han2016deep}, and variational dropout \citep{molchanov2017variational} all promote sparse weights; by (\ref{eq:lp-collapse}), in fixed precision they target the same underlying $\norm{\theta}_0$ and therefore induce equivalent output priors up to constants.
\end{itemize}

\section{Related Work}\label{sec:related}

We summarise the closest work in Table~\ref{tab:related} and discuss in turn.

\begin{table}[h]
  \caption{Comparison of weight-norm-vs-complexity bounds. ``Two-sided'' indicates whether both directions of a sandwich are proved.}
  \label{tab:related}
  \centering
  \small
  \begin{tabular}{p{0.22\linewidth}p{0.22\linewidth}p{0.18\linewidth}p{0.13\linewidth}p{0.10\linewidth}}
    \toprule
    \textbf{Reference}                                    & \textbf{Quantity bounded}                          & \textbf{Regulariser}               & \textbf{Precision}           & \textbf{Two-sided?}         \\
    \midrule
    \citet{schmidhuber1997discovering}                    & K-complexity of weight matrix                      & Levin search over external program & Any                          & No                          \\
    \citet{hinton1993keeping}                             & Description length of noisy weights                & Variational MDL                    & Noisy                        & No                          \\
    \citet{hochreiter1997flat}                            & Description length                                 & Flat-minima width                  & Limited                      & No                          \\
    \citet{arora2018stronger}, \citet{lotter2022pacbayes} & Generalisation gap                                 & Compression rate                   & Quantised                    & One-sided                   \\
    \citet{jacot2025deep}                                 & Binary circuit gate count of approximated function & Weighted L1 ResNet norm            & Real-valued                  & \textbf{Yes}                \\
    \citet{shaw2026bridging}                              & K-complexity of distribution                       & Variational MDL (Gaussian mixture) & Variational                  & One-sided (asymp.\ optimal) \\
    \midrule
    \textbf{This paper}                                   & \textbf{K-complexity of emitted string}            & \textbf{Any $L_p$ norm}            & \textbf{Any fixed precision} & \textbf{Yes}                \\
    \bottomrule
  \end{tabular}
\end{table}

\paragraph{Schmidhuber 1997: K-complexity of the weights themselves.}
\citet{schmidhuber1997discovering} measured a network's complexity as the length of an \emph{external} program (in a fixed universal language) that emits the weight matrix, then performed Levin search over those programs. The complexity quantity there is the description length of the external program, not a norm of the weights. Our result is in a sense the converse: under fixed precision, the network's own $L_p$ weight norm is itself such a description length, up to a logarithmic factor, with no external program necessary.

\paragraph{Jacot 2025: closest analog, different complexity.}
\citet{jacot2025deep} proved a sandwich between a weighted L1 norm of real-valued ResNets and the binary circuit gate count of the function being $\eps$-approximated, with equivalence ``within a power of 2'' of the optimal circuit. Read in parallel: Jacot bounds gate count (combinational complexity) of a real-valued ResNet $\eps$-approximating a real function; we bound Kolmogorov complexity (Turing complexity) of a fixed-precision looped network exactly emitting a string. The architectures differ (feedforward ResNet vs.\ looped); the precision differs (real-valued vs.\ fixed); the complexity quantity differs (gates vs.\ programs); the regulariser differs (weighted L1 vs.\ any $L_p$). The two results together suggest that a weight-norm-flavoured quantity is the right complexity measure across a range of architectures and idealisations, with the specific complexity it equals determined by the precision and architecture.

\paragraph{Shaw et al.\ 2026: closest in motivation, different in mechanism.}
\citet{shaw2026bridging} construct an asymptotically optimal MDL objective for transformers via a bridge to prefix Turing machines, achieving description-length optimality up to an additive constant. Their description length is the negative log marginal likelihood under a Gaussian-mixture variational posterior over weights. They optimise a learned variational MDL objective and prove asymptotic optimality of that objective; we show that the squared L2 weight norm \emph{itself}, used directly as a regulariser, is already within a logarithmic factor of optimal. The mechanisms are different (variational MDL vs.\ direct weight norm), but the two papers converge on the picture that standard transformer training is closer to optimal MDL than the absence of prior theory would suggest.

\paragraph{Universal induction and meta-learning.}
\citet{solomonoff1964formal,hutter2005universal,legg2007universal} establish the optimality of the universal prior. \citet{grau2024learning} show that meta-trained transformers can amortise Solomonoff induction empirically. None of these gives a quantitative bound between a weight-space regulariser and the universal prior, which is what Corollary~\ref{cor:prior} provides.

\paragraph{MDL, flat minima, compression bounds.}
\citet{hinton1993keeping} and \citet{hochreiter1997flat} relate generalisation to the precision required to express the weights: their description length is bits-to-represent-noisy-weights, not a norm. \citet{arora2018stronger} and \citet{lotter2022pacbayes} compress trained networks to obtain non-vacuous generalisation bounds, bounding the generalisation gap rather than the Kolmogorov complexity of the emitted string. Theorem~\ref{thm:main} unifies these: in fixed precision, the L2 norm directly counts the structural information in the weights, and a generalisation bound is then a one-step consequence (\S\ref{sec:implications}).

\paragraph{Turing completeness at fixed precision.}
The simulation results we depend on are
\citet{perez2019turing,perez2021attention,bhattamishra2020computational,Roberts2024decoder,qiu2025prompt,li2025constant,li2026efficient,giannou2023looped,dehghani2018universal}.
Of these, \citet{li2025constant,li2026efficient} are most relevant, since they
explicitly handle the constant-bit-precision regime our model captures.

\paragraph{Classical complexity-Kolmogorov links.}
The address-space argument underlying our log factor (each non-zero parameter
buys $\log W$ bits of structural information through its position) has a long
prehistory in complexity theory. Boolean circuit Kolmogorov complexity
\citep{allender2001nlnp} studies the analogous question for circuits emitting
strings, with a similar log factor for wire-indexing. Our contribution is to
specialise this principle to the fixed-precision neural setting and to recover
the matching upper bound via a Turing-complete construction.

\paragraph{Quantisation and sparsity.}
A practical literature on quantisation-aware training
\citep{hubara2017quantized,han2016deep} and sparsity-inducing priors
\citep{molchanov2017variational,frankle2019lottery,hoefler2021sparsity}
operationalises the regime where (\ref{eq:lp-collapse}) is tightest. These
methods target $\norm{\theta}_0$ either directly (pruning) or indirectly
(through quantised L2 or Bayesian-sparsity priors); our result identifies the
underlying complexity quantity they implicitly control.

\section{Discussion}\label{sec:discussion}

\paragraph{Limitations.}
The bound is asymptotic and we do not attempt to make the constants $c_U, c_d$
small; they will be large for realistic universal Turing machines, so the
result is conceptual rather than predictive at small scales. Our model is
restricted to looped networks emitting a single string from $x_0 = \mathbf{0}$;
standard supervised-learning settings (feedforward networks, batched inputs,
training losses) require adaptations we do not pursue here, such as loop
unrolling for feedforward and designated state coordinates for inputs and
labels. Corollary~\ref{cor:prior} concerns the induced output prior $Q(s)$, not
what gradient descent finds; closing the gap between minimum-norm networks and
trained networks is the natural next step. The theorem and its predictions are
not validated experimentally. Spectral and operator norms control Lipschitz
constants rather than information content, and their relation to $K(s)$ remains
open.

\paragraph{Two takeaways.} \emph{(1) The result is genuinely about the fixed-precision regime.} It is the discreteness of $\Param_{\delta, M}$ that gives a finite description length, and the analog limit \citep{siegelmann1995computational} shows no such bound can hold without it. \emph{(2) The result is genuinely about any $L_p$ norm.} The proof concerns the count of non-zero parameters, and the L1-vs-L2 distinction collapses in fixed precision (\ref{eq:lp-collapse}). The penalty's behaviour during training depends on $p$; the resulting Kolmogorov-complexity profile does not.

\paragraph{Conjectures.}
\begin{enumerate}
  \item \emph{Weight norm tracks data complexity.} For networks trained to fit a
        dataset $S$ with L2 weight decay, $\norm{\theta}_2^2$ at convergence tracks
        $K(S)$ in expectation, with the ratio bounded above by a constant as model
        width and depth grow.
  \item \emph{Flat minima are downstream.} The flat-minima phenomenon
        \citep{hochreiter1997flat} is a consequence of the weight-norm-as-description-length
        correspondence: minima of low $L_p$ norm are minima of low description length, and
        small description length is precisely the condition under which the loss landscape
        is locally flat in MDL coordinates.
  \item \emph{Effective complexity is log-augmented.} The right effective complexity for
        predicting generalisation scales as $\norm{\theta}_2^2 \log \norm{\theta}_2^2$,
        not as raw parameter count or as unaugmented norm.
\end{enumerate}

\paragraph{Conclusion.} In any fixed-precision regime, the smallest $L_p$ weight norm of a looped
network outputting $s$ equals $K(s)$ up to a logarithmic factor, realised by
permutation encodings. L2 weight decay therefore matches Solomonoff's universal
prior on outputs up to a polynomial factor in $2^{K(s)}$. Modern deep
learning's most reliable regulariser is a tractable proxy for the most powerful
idealised inductive bias known.

\newpage
\bibliographystyle{plainnat}
\bibliography{refs}


\appendix

\section{A Concrete Looped Neural Network Model}\label{app:model}

We instantiate Definition~\ref{def:lnn} as a ternary-weight looped transformer,
mirroring \citet{giannou2023looped} with the constant-bit-precision
quantisation of \citet{li2025constant}. The model is the same as in those works
modulo the explicit identification of the halt, emit, and bit channels.

\paragraph{Architecture.} A single transformer block applied iteratively, with: a multi-head
self-attention layer with $h$ heads of dimension $d_h$ and ternary
key/query/value projection matrices; a two-layer feedforward network with
hidden width $d_f$, ReLU activation, and ternary weights and biases; and
residual connections around both sub-layers. The state at iteration $t$ is
$X^{(t)} \in \R^{n \times d}$. The first row is the \emph{control row}; its
coordinate 1 is the halt channel, coordinate 2 is the emit channel, coordinate
3 is the bit channel. Initial state $X^{(0)} = \mathbf{0}$.

\paragraph{Ternary precision.} All parameters lie in $\{-1, 0, +1\}$. Activations are clipped to $\{-A,
  \ldots, +A\}$ for a constant $A$ and rounded after each non-linearity, so the
entire computation lives in a finite alphabet. \citet{li2025constant} prove
that this regime supports Turing-complete simulation; we use their construction
as our universal looped network $T_U$.

\paragraph{Adaptation to other precisions.} The construction is invariant under uniform rescaling of weights and clipping
bounds: replacing $(\theta, A)$ with $(\delta \theta, \delta A)$ preserves all
transitions, so the same architecture instantiates
Definition~\ref{def:precision} in any $\Param_{\delta, M}$ with $\delta \le 1
  \le M$.

\section{Detailed Proof of the Upper Bound}\label{app:upper}

We expand the construction of Lemma~\ref{lem:upper}.

\paragraph{The universal looped network $T_U$.} Let $T_U$ be the ternary looped transformer of Appendix~\ref{app:model}, of
total parameter count $d_U$ and non-zero count $c_U' = \norm{T_U}_0$. By
construction \citep{li2025constant}, $T_U$ has a designated \emph{program
  region} consisting of coordinates $\{4, 5, \ldots, 3 + \ell_{\max}\}$ of the
control row for some maximum program length $\ell_{\max}$ that grows with the
simulation. (In the constant-bit-precision construction of
\citet{li2025constant}, $\ell_{\max}$ scales with the input sequence length;
for our purposes we treat $\ell_{\max}$ as effectively unbounded by feeding a
long enough scratch sequence.)

When $T_U$ is run with the bits of a program $p \in \B^m$ initialised at
positions $4, \ldots, 3 + m$ of the control row, $T_U$ halts (control-row
coordinate 1 becomes positive) at some finite iteration $\tau$ and emits the
bits of $U(p)$ on the emit/bit channels of the control row, provided $U(p)$ is
defined.

\paragraph{Program-loading module.} We construct $\theta_p$ by augmenting $T_U$ with two pieces: a constant-size
\emph{iteration-1 indicator} that fires only on the first iteration, and a
\emph{routing layer} containing exactly one ternary parameter per program bit.

\emph{(i) Iteration-1 indicator.} Add two scratch coordinates $c_1, c_2$ to the state, both
initialised to $0$. Their FFN updates implement the recurrences
\[
  c_2^{(t+1)} = \mathrm{clip}_{[0,1]}\!\bigl(c_2^{(t)} + 1\bigr), \qquad
  c_1^{(t+1)} = \mathrm{ReLU}\!\bigl(1 - c_2^{(t)}\bigr).
\]
Tracing from $c_1^{(0)} = c_2^{(0)} = 0$:
\begin{itemize}
  \item $t = 1$: $c_2^{(1)} = 1$, $c_1^{(1)} = \mathrm{ReLU}(1 - 0) = 1$.
  \item $t \ge 2$: $c_2$ stays at $1$, so $c_1^{(t)} = \mathrm{ReLU}(1 - 1) = 0$.
\end{itemize}
So $c_1^{(t)} = \mathbf{1}[t = 1]$. The recurrences use four ternary parameters total
(one self-connection on $c_2$, one bias on $c_2$, one cross-weight $-1$ from
$c_2$ to $c_1$, one bias $+1$ on $c_1$) plus reuse of the standard
activation clipping primitive of \citet{li2025constant}. Cost: $O(1)$
$p$-independent non-zeros.

\emph{(ii) Program-routing layer.} For each $i \in \{1, \ldots, |p|\}$, place a single ternary parameter $\sigma_i \in
  \{-\delta, +\delta\}$ on the weight from $c_1$ to coordinate $3 + i$ of the
control row, with $\sigma_i = +\delta$ if $p_i = 1$ and $\sigma_i = -\delta$ if
$p_i = 0$. The contribution to coordinate $3 + i$ of the residual stream at
iteration $t$ is therefore $\sigma_i \cdot c_1^{(t)} = \sigma_i \cdot \mathbf{1}[t=1]$.

Trace the residual stream at the program region (coordinates $4, \ldots, 3 +
  |p|$):
\[
  X^{(0)}_{3+i} = 0, \qquad X^{(1)}_{3+i} = X^{(0)}_{3+i} + \sigma_i = \sigma_i, \qquad X^{(t)}_{3+i} = X^{(t-1)}_{3+i} + 0 = \sigma_i \quad \text{for } t \ge 2.
\]
The program bits land in the program region at iteration 1 and persist for
$T_U$'s subsequent simulation. Activation thresholding maps $\pm \delta \to \pm
  1$, the ternary encoding of program bits $T_U$ expects.

\emph{Position-spread encoding.} The construction writes $|p|$ ternary
parameters to $|p|$ distinct coordinates of $T_U$'s state. In the
constant-precision transformer of \citet{li2025constant}, these $|p|$
coordinates correspond to $|p|$ distinct sequence positions, one program bit
per position: at constant model width $d$, a single position holds only $O(d)
  = O(1)$ bits, so an arbitrarily long program cannot fit in a single position
and must be spread across $\Theta(|p|)$ positions of the input sequence. Each
$\sigma_i$ thus acts as a position-specific bias, breaking translation
invariance via a parameter dedicated to position $i$.

\emph{Cost.} The routing layer uses exactly $|p|$ ternary parameters, one per
program bit. No other per-bit overhead.

\paragraph{Counting non-zero parameters.}
\[
  \norm{\theta_p}_0 \;=\; \underbrace{c_U'}_{T_U} \;+\; \underbrace{|p|}_{\text{routing}} \;+\; \underbrace{O(1)}_{\text{iter-1 indicator}} \;=\; |p| + c_U,
\]
where $c_U := c_U' + O(1)$ collects all $p$-independent overhead.

\paragraph{Conclusion.} Choosing $p$ to be a shortest $U$-program for $s$, $|p| = K(s)$, gives $\nc(s)
  \le \norm{\theta_p}_0 \le K(s) + c_U$, proving (\ref{eq:upper}).

\section{Detailed Proof of the Lower Bound}\label{app:lower}

We give the self-delimiting encoding $\hat p : \Param_{\delta, M} \to \B^*$ of
Lemma~\ref{lem:lower}.

\paragraph{Step 1: enumerate non-zero parameters.} List the $W = \norm{\theta}_0$ non-zero parameters as $\Lambda(\theta) =
  \{(\ell_j, u_j, v_j, w_j)\}_{j=1}^{W}$, where $\ell_j \in \N$ is a layer index,
$u_j, v_j \in \N$ are source and target neuron indices, and $w_j \in
  \mathcal{V} := (\delta\Z \cap [-M, M]) \setminus \{0\}$ is the value. The value
alphabet $\mathcal{V}$ has size $2 \lfloor M/\delta \rfloor = O(1)$.

\paragraph{Step 2: prune and relabel.} Neurons not appearing in any $u_j, v_j$ contribute no computation and may be
removed; the network's output is unchanged. After pruning, the set of distinct
neuron indices has cardinality $n \le 2W$. Layers with no non-zero parameter
may be collapsed; the number of layers is at most $L' \le W$. Relabel neurons
and layers by order-preserving bijections to $\{1, \ldots, n\}$ and $\{1,
  \ldots, L'\}$.

\paragraph{Step 3: encode.} The description $\hat p(\theta)$ consists of:
\begin{itemize}
  \item A self-delimiting prefix encoding the integer $W$ (Elias-style): $\log_2 W + 2
          \log_2 \log_2 W + O(1) = O(\log W)$ bits.
  \item The state dimension $n$ and number of layers $L'$ in $O(\log W)$ bits each.
  \item Architecture metadata (head count, hidden width, activation choice) in $O(1)$
        bits.
  \item For each $j \in [W]$, the tuple $(\ell_j, u_j, v_j, w_j)$ in
        \[
          \lceil \log_2 L' \rceil + 2 \lceil \log_2 n \rceil + \lceil \log_2 |\mathcal{V}| \rceil \;\le\; 3 \log_2 W + O(1)
        \]
        bits, fixed-width binary; the widths are derivable from the prefix and so need
        not be transmitted.
\end{itemize}
The total length is $3 W \log_2 W + O(W)$ bits.

\paragraph{Step 4: simulator program.} Let $\Pi$ be a constant-size program for $U$ that:
\begin{enumerate}
  \item Parses the self-delimited prefix to recover $W, n, L'$, and the architecture
        metadata.
  \item Parses the $W$ tuples to reconstruct $\theta \in \Param_{\delta, M}$.
  \item Initialises $X^{(0)} = \mathbf{0}$ and iterates the network forward at fixed
        precision, applying each layer's linear update plus ReLU and the activation
        discretisation specified by the architecture.
  \item At each iteration $t \ge 1$, after each layer is applied: if $(X^{(t)})_{\halt}
          > 0$, halt; otherwise, if $(X^{(t)})_{\emit} > 0$, emit
        $\mathbf{1}[(X^{(t)})_{\bit} > 0]$.
\end{enumerate}
$\Pi$ is fixed, with constant size $|\Pi|$ independent of $\theta$.

\paragraph{Step 5: assembly.} The complete program is $\hat p(\theta) = \Pi \cdot \text{encoding}(\theta)$
(concatenation, parseable on $U$ since $\Pi$ knows where its own code ends).
Then $U(\hat p(\theta)) = s$ and
\[
  |\hat p(\theta)| \;\le\; |\Pi| + 3 W \log_2 W + O(W) \;\le\; c_d W \log_2 W + c_d
\]
for $c_d = \max(3 + \eps, |\Pi|)$ and all $W \ge 1$. Hence $K(s) \le c_d
  \norm{\theta}_0 \log \norm{\theta}_0 + c_d$; taking the infimum over $\theta$
outputting $s$ gives (\ref{eq:lower}).

\section{The Permutation Tightness Witness}\label{app:permutation}

We give the explicit construction underlying the permutation example in
\S\ref{sec:main:tightness}. The clean choice of output format is the one-hot
row-major serialisation of the permutation matrix; this avoids any
bit-extraction circuit and keeps the construction at $\Theta(N)$ non-zero
parameters.

\paragraph{Setup.} Fix $N \in \N$ and a permutation $\pi : [N] \to [N]$. Let $P_\pi \in \{0,1\}^{N
  \times N}$ denote its permutation matrix: $(P_\pi)_{i,\pi(i)} = 1$ and zeros
elsewhere. Define
\[
  s_\pi \;=\; (P_\pi)_{1,1}\,(P_\pi)_{1,2}\,\cdots\,(P_\pi)_{1,N}\,(P_\pi)_{2,1}\,\cdots\,(P_\pi)_{N,N} \;\in\; \B^{N^2},
\]
the row-major serialisation of $P_\pi$. The string $s_\pi$ has length $N^2$
with exactly $N$ ones and $N^2 - N$ zeros.

\paragraph{Kolmogorov complexity.} The map $\pi \mapsto s_\pi$ is a bijection, so describing $s_\pi$ is equivalent
to describing $\pi$. Counting gives $K(s_\pi) \ge \log_2 N! - O(1)$ for at
least half of all permutations, and the universal upper bound gives $K(s_\pi)
  \le \log_2 N! + O(\log N)$. Hence
\[
  K(s_\pi) \;=\; \log_2 N! + O(\log N) \;=\; \Theta(N \log N) \qquad \text{for typical (and worst-case) }\pi.
\]

\paragraph{Construction.} We exhibit a ternary looped network $\theta_\pi$ that outputs $s_\pi$ with
$\norm{\theta_\pi}_0 = \Theta(N)$. All $\pi$-dependence is concentrated in
exactly $N$ non-zero parameters; the remaining $\Theta(N)$ non-zero parameters
realise a fixed $\pi$-independent control circuit.

\emph{State coordinates.} The state has $n = \Theta(N)$ coordinates:
\begin{itemize}
  \item Coordinates $1, 2, 3$: halt, emit, bit channels.
  \item Coordinates $4, \ldots, 3 + N$: \emph{row indicator} $r \in \R^N$, a one-hot
        vector for the current row index $i \in [N]$.
  \item Coordinates $4 + N, \ldots, 3 + 2N$: \emph{column scanner} $c \in \R^N$, a
        one-hot vector for the current column index $j \in [N]$.
  \item One scratch coordinate $m$: the \emph{match indicator}, equal to
        $(P_\pi)_{i,j}$ at the current $(i, j)$.
\end{itemize}

\emph{Permutation parameters ($\pi$-dependent, exactly $N$ non-zeros).} For each $i \in [N]$, place a single $+1$ ternary parameter in the network's
``edge table'': specifically, a weight from row coordinate $r_i$ to a hidden
unit $h_i$, and the column scanner $c$ is wired into $h_i$ via a $\pi$-dependent
permutation map. The $N$ weights $W_\pi[k, \pi(k)] = +1$ encode the choice of
$\pi$; their positions carry $\log_2 N!$ bits of information.

\emph{Match circuit ($\pi$-independent, $\Theta(N)$ size).} A fixed FFN sub-layer computes the bilinear form $m = r^\top W_\pi c$. The
hidden layer has $N$ units, the $k$th computing $\text{ReLU}(r_k + (W_\pi c)_k -
  1)$ (an AND of two $\{0,1\}$ values). The output combines all hidden units with
$+1$ weights into the match channel. The fan-in from $r$ is the identity ($N$
non-zero diagonal weights); the fan-in from $c$ uses the $N$ permutation
parameters; biases ($-1$ per hidden unit) and the output projection ($+1$ per
hidden unit) contribute $\Theta(N)$ further non-zero parameters, all
$\pi$-independent.

\emph{Scanner advance ($\pi$-independent, $\Theta(N)$ size).} A fixed sub-circuit advances $c$ by one position per iteration via a cyclic-shift
matrix on $N$ coordinates ($N$ non-zero weights). When $c$ wraps from position
$N$ back to position $1$, the same circuit advances $r$ by one position
($\Theta(N)$ further non-zero weights). When $r$ wraps around, the halt channel
fires (constant-size logic).

\emph{Emission.} At each iteration, the emit channel is asserted (a constant
bias) and the bit channel is set to the match indicator $m$. After $N^2$
iterations the row-major serialisation of $P_\pi$ has been streamed to the
output and the halt channel fires.

\paragraph{Parameter count.}
\[
  \norm{\theta_\pi}_0 \;=\; \underbrace{N}_{\pi\text{-dependent}} \;+\; \underbrace{\Theta(N)}_{\pi\text{-independent control}} \;=\; \Theta(N).
\]
The $N$ $\pi$-dependent parameters encode the choice of permutation through
their positions, contributing $\log_2 N!$ bits of information. The
$\pi$-independent control circuit has fixed structure shared across all $\pi$
and all $N$, but instantiates with $\Theta(N)$ non-zero entries when
$N$-dimensional one-hot encodings are used.

\paragraph{Conclusion.} For typical (uniformly random) $\pi$, the construction shows
\[
  K(s_\pi) \;=\; \Theta(N \log N) \;=\; \Theta\!\bigl(\nc(s_\pi)\, \log \nc(s_\pi)\bigr),
\]
saturating (\ref{eq:lower}) up to constants. The logarithmic factor is
essential: the $N$ permutation parameters could not, by counting, identify
$\pi$ with fewer than $\log_2 N! = \Theta(N \log N)$ bits of positional
information, and that is exactly what the network exploits to emit the
$N^2$-bit string $s_\pi$. \qed

\paragraph{Remark on the binary-output variant.} If one prefers the more familiar output format $s'_\pi =
  \pi(1)\pi(2)\cdots\pi(N)$ written in binary (length $N \lceil \log_2 N \rceil$,
K-complexity still $\Theta(N \log N)$), the construction needs an additional
bit-extraction circuit that converts the one-hot column indicator into its
$\log_2 N$-bit binary representation. By the address-space argument of
\S\ref{sec:main:tightness}, any such fixed circuit on $N$-dimensional one-hot
vectors requires $\Theta(N \log N)$ non-zero parameters (equivalent to storing
a $\log_2 N \times N$ readout matrix), so the binary variant gives $\nc(s'_\pi)
  = \Theta(N \log N)$. The $L_p$ collapse and our main bound are unaffected, but
the binary variant does \emph{not} saturate the log factor. The one-hot variant
above is the cleaner witness.

\section{$L_p$ Sandwich Bounds}\label{app:lp}

\begin{proposition}[$L_p$ sandwich]\label{prop:lp}
  For every $p \in [1, \infty)$ and every computable $s$,
  \[
    \nc_p(s)^p \;\le\; M^p (K(s) + c_U) \quad \text{and} \quad K(s) \;\le\; \tfrac{c_d}{\delta^p} \nc_p(s)^p \log(\nc_p(s)^p / \delta^p) + c_d,
  \]
  where $\nc_p(s) = \inf_\theta \{ \norm{\theta}_p : \theta \in \Param_{\delta,
      M},\, \theta \text{ outputs } s \}$.
\end{proposition}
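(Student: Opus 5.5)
The plan is to derive both inequalities of Proposition~\ref{prop:lp} by composing Theorem~\ref{thm:main} (equivalently Lemmas~\ref{lem:upper} and~\ref{lem:lower}) with the two sides of the collapse (\ref{eq:lp-collapse}), applied network by network and then minimised.

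\emph{First inequality.} Take $p^\ast$ a shortest $U$-program for $s$. Lemma~\ref{lem:upper} gives a network $\theta$ that outputs $s$ with $\norm{\theta}_0 \le K(s) + c_U$. The right-hand side of (\ref{eq:lp-collapse}) gives $\norm{\theta}_p^p \le M^p \norm{\theta}_0$. Since $\nc_p(s)$ is an infimum over networks that output $s$, we get $\nc_p(s)^p \le \norm{\theta}_p^p \le M^p (K(s) + c_U)$.

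\emph{Second inequality.} Let $\theta$ be any network outputting $s$, with $W = \norm{\theta}_0 \ge 1$. Lemma~\ref{lem:lower} gives $K(s) \le c_d W \log_2 W + c_d$. The left-hand side of (\ref{eq:lp-collapse}) gives $W \le \norm{\theta}_p^p/\delta^p$. The function $x \mapsto x\log_2 x$ is nondecreasing for $x \ge 1$, and $W \ge 1$. So we may substitute the upper bound for $W$ in both factors:
\[
K(s) \le \tfrac{c_d}{\delta^p}\norm{\theta}_p^p \log_2\!\bigl(\norm{\theta}_p^p/\delta^p\bigr) + c_d .
\]
Now pass to the infimum. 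Because $\Param_{\delta,M}$ is discrete and each non-zero entry satisfies $|\theta_i| \ge \delta$, the set of attainable values of $\norm{\theta}_p^p$ is a discrete subset of $[0,\infty)$. Hence the infimum defining $\nc_p(s)$ is attained by some $\theta^\ast$, and applying the display to $\theta^\ast$ gives the claim.

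\emph{Care points.} These are bookkeeping rather than real obstacles.
\begin{itemize}
\item If $\delta > 1$ were allowed, $\norm{\theta}_p^p/\delta^p$ could fall below $\norm{\theta}_p^p$. This does not matter, since we only need $W \le \norm{\theta}_p^p/\delta^p$ and monotonicity on $[1,\infty)$.
\item The degenerate case $W = 0$ cannot occur for a network that halts. From $x_0 = \mathbf{0}$, the all-zero network keeps every channel at $0$ and never fires $\halt$, so $W \ge 1$ for any network outputting $s$.
\item Lemma~\ref{lem:lower} is stated with $\log_2$, while the proposition writes $\log$. The discrepancy is a constant factor and can be absorbed into $c_d$.
\end{itemize}
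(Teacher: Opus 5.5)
Your proposal is correct and follows essentially the paper's argument, which also just composes the two sides of the collapse (\ref{eq:lp-collapse}) with the two reductions. The only difference is bookkeeping. The paper works at the level of the infima, writing $\nc_p(s)^p \le M^p \nc(s)$ and $\nc(s) \le \nc_p(s)^p/\delta^p$ and substituting into Theorem~\ref{thm:main}, whereas you argue network by network and then use attainment of the infimum; your explicit handling of $W \ge 1$ and of the monotonicity of $x \log x$ fills in details the paper leaves implicit.
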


\begin{proof}
  By (\ref{eq:lp-collapse}), $\delta^p \norm{\theta}_0 \le \norm{\theta}_p^p \le M^p \norm{\theta}_0$, so $\nc_p(s)^p \le M^p \nc(s)$ and $\nc(s) \le \nc_p(s)^p / \delta^p$. Substitute into Theorem~\ref{thm:main}.
\end{proof}

Absorbing the precision constants:
\[
  L_1: \nc_1(s) = \tilde\Theta(K(s)), \qquad L_2: \nc_2(s)^2 = \tilde\Theta(K(s)), \qquad L_p: \nc_p(s)^p = \tilde\Theta(K(s)).
\]
Spectral and operator norms are not directly captured by parameter count:
spectral regularisation controls Lipschitz constants rather than information
content. The relationship between spectral norms and $K(s)$ is left open.

\section{Precision Regimes and the Limit of Vanishing Precision}\label{app:precision}

Theorem~\ref{thm:main} holds for any fixed precision; the constants depend on
the dynamic range $M/\delta$.

\paragraph{Ternary ($\delta = M = 1$).} Cleanest constants: per-entry description $3 \log_2 W + 1$ bits.

\paragraph{$b$-bit integer ($\delta = 1$, $M = 2^{b-1} - 1$).} Per-entry $3 \log_2 W + b$ bits. Linear in $b$.

\paragraph{$b$-bit dyadic.} Value alphabet of size $2^b$: per-entry $3 \log_2 W + b$ bits.

\paragraph{Quantitative dependence.} Both $c_U$ and $c_d$ scale as $O(\log_2(M/\delta))$. Even at very fine
precision, the sandwich bound holds with constants growing only as
$\log(M/\delta)$, i.e., as the bit-width.

\paragraph{The limit $\delta \to 0$.} The sandwich becomes vacuous: \citet{siegelmann1995computational} show some
real-weight networks compute non-recursive functions with finite norm; for such
networks $K(s) = \infty$ while $\norm{\theta}_2$ is finite. The fixed-precision
condition is what locks the result to the computable world. Since real hardware
always uses finite precision, the bound applies to every regime that runs on
actual deployments.

\section{Levin Complexity and Bounded Halting Time}\label{app:levin}

If we restrict halting time to $\le T$ iterations, the construction of
Lemma~\ref{lem:upper} translates a $U$-program $p$ with running time $T(p)$
into a network with halting time $T'(p) \le \rho \cdot T(p)$, where $\rho$ is
the per-step \emph{simulation overhead} of $T_U$: the number of looped-network
iterations per $U$-step. By the constructions of
\citet{li2025constant,giannou2023looped}, $\rho$ is a constant independent of
$p$, $|p|$, and the network state (it depends only on the architecture and on
the universal Turing machine $U$).

The encoding of Lemma~\ref{lem:lower} adds $\log T'$ bits to record the
iteration budget. Substituting $T' = \rho T$ and absorbing $\log \rho = O(1)$
into the constant gives
\[
  K(s) + \log T \;\le\; O\!\bigl(\nc_T(s) \log \nc_T(s)\bigr) + O(1),
\]
where $\nc_T(s)$ is the minimum non-zero parameter count of a network halting
within $T$ iterations and outputting $s$. The left-hand side is, up to
constants, Levin's $Kt$-complexity \citep{levin1973universal}, $Kt(s) = \min_p
  (|p| + \log T(p))$. Time-bounded neural complexity coincides with Levin
complexity up to log factors, with the multiplicative simulation overhead
$\rho$ of $T_U$ contributing only an additive $\log \rho = O(1)$ to the bound.
This form is most directly relevant to learning, since training procedures cap
the running time of the learned model.

\section{Proof of the Solomonoff Connection}\label{app:prior}

\paragraph{The two priors and their equivalence.} The Gaussian prior over fixed-precision weights, $\pi_{L_2}(\theta) \propto
  2^{-\norm{\theta}_2^2}$, and the sparsity prior, $\pi_{L_0}(\theta) \propto
  2^{-\norm{\theta}_0}$, agree up to constants in the exponent: by
(\ref{eq:lp-collapse}),
\[
  2^{-M^2 \norm{\theta}_0} \;\le\; 2^{-\norm{\theta}_2^2} \;\le\; 2^{-\delta^2 \norm{\theta}_0}
  \qquad \text{for } \theta \in \Param_{\delta, M},
\]
so $\pi_{L_2}(\theta) = \pi_{L_0}(\theta)^{\beta(\theta)}$ for some
$\beta(\theta) \in [\delta^2, M^2]$. The same collapse gives the analogous
relation between $\pi_{L_2}$ and $\pi_{L_1}(\theta) \propto
  2^{-\norm{\theta}_1}$. The calculation below uses $\pi_{L_0}$ for clarity; the
result transfers to any $L_p$ weight decay by absorbing the precision constants
into $\alpha, \beta$. Let $Q(s) = \sum_{\theta \,:\, \theta \text{ outputs } s}
  \pi_{L_0}(\theta)$.

\paragraph{Uniformity of constants in $s$.} The constants $\alpha, \beta$ in Corollary~\ref{cor:prior} depend on $(\delta,
  M)$ and the architecture but not on $s$. This is automatic from the proof: the
upper bound uses $\norm{\theta_*}_0 \le K(s) + c_U$ with $c_U$ uniform; the
lower bound uses $\norm{\theta}_0 \ge K(s)/(c_d \log K(s)) - O(1)$ with $c_d$
uniform, plus the geometric-series argument whose multiplicative slack depends
only on the counting bound $W^{O(W)}$ and not on $s$.

\paragraph{Lower bound on $Q(s)$.} By Lemma~\ref{lem:upper}, some $\theta_*$ outputs $s$ with $\norm{\theta_*}_0
  \le K(s) + c_U$. So
\[
  Q(s) \;\ge\; \pi(\theta_*) \;=\; 2^{-\norm{\theta_*}_0} \;\ge\; 2^{-K(s) - c_U}.
\]

\paragraph{Upper bound on $Q(s)$.} By Lemma~\ref{lem:lower}, every $\theta$ outputting $s$ has $\norm{\theta}_0
  \ge K(s)/(c_d \log K(s)) - O(1)$. The number of distinct $\theta \in
  \Param_{\delta, M}$ with $\norm{\theta}_0 = W$ is at most $W^{O(W)}$ (counting
(location, value) tuples). Summing $W^{O(W)} \cdot 2^{-W}$ over $W \ge
  K(s)/(c_d \log K(s))$ gives a geometric series dominated by its first term:
\[
  Q(s) \;\le\; 2^{-K(s)/(c_d \log K(s)) + O(\log K(s))} \;=\; 2^{-K(s)/(c_d' \log K(s))}.
\]

\paragraph{Comparison with Solomonoff.} The universal prior $M$ satisfies $-\log M(s) = K(s) + O(1)$
\citep{li2008introduction}. The L2-weight-decay prior $Q$ matches in the
exponent up to the logarithmic factor of our bound: $-\log Q(s) \in [K(s)/(c_d'
    \log K(s)),\, K(s) + c_U]$.

\section{Computability of $\nc$}\label{app:computability}

\begin{proposition}
  $\nc$ is upper-semicomputable but not computable.
\end{proposition}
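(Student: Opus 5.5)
The proposition has two halves, upper-semicomputability and non-computability, and I would prove them separately.

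\emph{Upper-semicomputability.} I would exhibit a dovetailed enumeration. For $s\in\B^*$, run in parallel over all pairs $(\theta, T)$ with $\theta \in \Param_{\delta,M}$ and $T\in\N$. Each $\theta$ is a finite object (dimension $d$, entries in the finite alphabet $\delta\Z\cap[-M,M]$), so the set of such $\theta$ is countable and effectively enumerable. For each pair, simulate the looped network for $T$ iterations exactly at fixed precision, as the simulator $\Pi$ in Appendix~\ref{app:lower} does. Whenever some $\theta$ is found to halt within $T$ iterations with output $s$, update the running estimate to $\min(\text{current}, \norm{\theta}_0)$.

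This produces a computable, non-increasing sequence of approximations $\nc^{(k)}(s)$ that starts from $\infty$. It converges to $\nc(s)$, since the minimiser (which exists when $\nc(s)<\infty$, the infimum being over integers) is eventually discovered. It never drops below $\nc(s)$, since only genuine witnesses are recorded. That is exactly upper-semicomputability.

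\emph{Non-computability.} I would reduce to the non-computability of $K$ via a Berry-paradox argument; the sandwich of Theorem~\ref{thm:main} alone only gives $K$ up to a log factor, so a direct reduction is not enough. Suppose $\nc$ were computable. Define a program $q_n$ that, on input $n$, enumerates strings $s$ in length-lexicographic order and outputs the first $s$ with $\nc(s) \ge n$. Such an $s$ exists because only finitely many $\theta$ have $\norm{\theta}_0 < n$. Pruning to at most $2W$ neurons as in Lemma~\ref{lem:lower}, the count is at most $W^{O(W)}$, and each such $\theta$ outputs at most one string.

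This gives $K(s_n) \le \log_2 n + O(\log\log n)$. Lemma~\ref{lem:lower} gives $K(s_n) \ge c\, \nc(s_n)/\ldots$; more precisely, since $\nc(s_n)\ge n$, Lemma~\ref{lem:upper} gives $n \le \nc(s_n) \le K(s_n) + c_U$. Combining, $n \le \log_2 n + O(\log\log n) + c_U$, which fails for large $n$. This is a contradiction, so $\nc$ is not computable. Note that only the upper bound \eqref{eq:upper} is needed here, which is the clean direction.

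\emph{Main obstacle.} The delicate step is the upper-semicomputability half. It requires that the set of $\theta$ is effectively enumerable with a well-defined exact simulation. Parameter count alone does not bound the state dimension $n$ or depth $L$, since zero parameters can pad arbitrarily. I would therefore enumerate pruned, relabelled encodings exactly as in Step~2 of Appendix~\ref{app:lower}, and check that pruning preserves output. That check is a point needing care, because biases and the designated halt/emit/bit coordinates must be retained even if untouched.

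I would also verify that the existence claim in the Berry step, that some $s$ has $\nc(s)\ge n$, is computably witnessed. It is, because $q_n$ simply searches and, under the hypothesis that $\nc$ is computable, each test $\nc(s)\ge n$ is decidable.
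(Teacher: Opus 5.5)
Your proof is correct. The upper-semicomputability half is the same dovetailing argument the paper gives. The non-computability half takes a different and more careful route.

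The paper argues that a computable $\nc$ would make $K$ computably approximable within a logarithmic factor, ``contradicting the standard non-computability of $K$''. Non-computability of $K$ by itself does not rule out such an approximation, so that step silently relies on a stronger classical fact: every computable lower bound on $K$ is bounded. Your Berry-paradox search proves exactly this fact inline for the unbounded function $\nc - c_U$, which lower-bounds $K$ by \eqref{eq:upper}. As a result you need only the clean direction of the sandwich plus unboundedness of $\nc$, and never the logarithmic factor of \eqref{eq:lower}. The paper's version is a one-liner resting on an unstated theorem; yours is self-contained.

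Two small corrections. First, the fragment asserting that Lemma~\ref{lem:lower} gives a lower bound on $K(s_n)$ has the direction wrong, since that lemma bounds $K$ from above. Delete it: the inequality you actually use, $K(s_n) \ge \nc(s_n) - c_U$, is Lemma~\ref{lem:upper}.

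Second, the obstacle you flag is misplaced. Upper-semicomputability needs no pruning, because you can dovetail over every finite (architecture, parameter vector) pair regardless of zero-padding, and enumeration order is irrelevant for a limit from above. Pruning matters only for the finiteness claim that makes $q_n$ terminate. Even there you can bypass it by citing the statement of Lemma~\ref{lem:lower}: every $s$ with $\nc(s) < n$ satisfies $K(s) \le c_d n \log_2 n + c_d$, and the all-zero network never halts. So there are only finitely many such $s$.
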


\begin{proof}
  \emph{Upper-semicomputability.} Enumerate fixed-precision looped networks in increasing $\norm{\cdot}_0$ order, simulate each on $x_0 = 0$ with progressively larger time bounds (dovetailing), and emit a new upper bound on $\nc(s)$ each time a network halts and outputs $s$.

  \emph{Non-computability.} By Theorem~\ref{thm:main}, $\nc(s) = \tilde\Theta(K(s))$. If $\nc$ were computable, $K$ would be approximable to within a logarithmic factor by a computable function, contradicting the standard non-computability of $K$ \citep{li2008introduction}.
\end{proof}



\end{document}